\newtheorem{theorem}{Theorem}[section]
\newtheorem{lemma}[theorem]{Lemma}
\theoremstyle{definition}
\newtheorem{definition}{Definition}[section]
\DeclareMathOperator{\bias}{Bias}
\DeclareMathOperator{\E}{\mathbb{E}}
\pgfplotsset{
colormap={cool}{rgb255(0cm)=(205, 230, 249); rgb255(1cm)=(0,128,255); rgb255(2cm)=(255,0,255)}
}
\newcommand{\lub}[1]{{\underset{#1}{\mathrm{sup}}} \hspace{.3em}}
\newcommand*\dif{\mathop{}\!\mathrm{d}}
\begin{document}

\title{The Bias-Expressivity Trade-off}

\author{\authorname{Julius Lauw\sup{\dagger}\orcidAuthor{0000-0003-4201-0664}, Dominique Macias\sup{\dagger}\orcidAuthor{0000-0002-6506-4094}, Akshay Trikha\sup{\dagger}\orcidAuthor{0000-0001-8207-6399}, Julia Vendemiatti\sup{\dagger}\orcidAuthor{0000-0002-6547-9601},\\ George D.\  Monta\~nez\sup{\dagger}\orcidAuthor{0000-0002-1333-4611}}
\affiliation{AMISTAD Lab, Department of Computer Science, Harvey Mudd College, Claremont, CA 91711, USA}
\email{\{julauw, dmacias, atrikha, jvendemiatti, gmontanez\}@hmc.edu}
\sup{\dagger}denotes equal authorship.
}
%\author{\authorname{Author\orcidAuthor{0000-0000-0000-0000},  Author\orcidAuthor{0000-0000-0000-0000},  Author\orcidAuthor{0000-0000-0000-0000},  Author\orcidAuthor{0000-0000-0000-0000}, Author\orcidAuthor{0000-0000-0000-0000}}
%\affiliation{Academic Lab, Department, School, City, State Zip, Country}
%\email{email@institution.edu}
%}

\keywords{Machine Learning, Algorithmic Search, Inductive Bias, Entropic Expressivity}

\abstract{Learning algorithms need bias to generalize and perform better than random guessing. We examine the  flexibility (expressivity) of biased algorithms. An expressive algorithm can adapt to changing training data, altering its outcome based on changes in its input. We measure expressivity by using an information-theoretic notion of entropy on algorithm outcome distributions, demonstrating a trade-off between bias and expressivity. To the degree an algorithm is biased is the degree to which it can outperform uniform random sampling, but is also the degree to which is becomes inflexible. We derive bounds relating bias to expressivity, proving the necessary trade-offs inherent in trying to create strongly performing yet flexible algorithms.
}

\onecolumn \maketitle \normalsize \setcounter{footnote}{0} \vfill

\section{\uppercase{Introduction}}
\label{sec:introduction}
\noindent
Biased algorithms, namely those which are more heavily predisposed to certain outcomes than others, have difficulty changing their behavior in response to new information or new training data. Yet  bias is needed for learning~\cite{futilityOfBiasFreeLearning}. Given a set of information resources (or a distribution over them), an algorithm that can output many different responses is said to be more \textit{expressive} than one that cannot. We explore the inverse relationship between algorithmic bias and expressivity for learning algorithms. This work builds on recent results in theoretical machine learning, which highlight the necessity of incorporating biases tailored to specific learning problems in order to achieve learning performance that is better than uniform random sampling of the hypothesis space~\cite{futilityOfBiasFreeLearning}. A trade-off exists between specialization and flexibility of learning algorithms. While algorithmic bias can be viewed as an algorithm's ability to `specialize', expressivity characterizes the `flexibility' of a learning algorithm. Using the algorithmic search framework for learning~\cite{montanez2017dissertation}, we define a specific form of expressivity, called \textit{entropic expressivity}, which is a function of the information-theoretic entropy of an algorithm's induced probability distribution over its search space. Under this notion of expressivity, the degree to which a search algorithm is able to spread its probability mass on many distinct target sets captures the extent to which the same algorithm is said to be capable of `expressing' a preference towards different search outcomes. No algorithm can be both highly biased and highly expressive.

\section{\uppercase{Related Work}}
\noindent
Inspired by Mitchell's work highlighting the importance of incorporating biases in classification algorithms to generalize beyond training data~\cite{needforbiases}, we propose a method to measure algorithmic expressivity in terms of the amount of bias induced by a learning algorithm. This paper delves further into the relationships between algorithmic bias and expressivity by building on the search and bias theoretical frameworks defined in \cite{futilityOfBiasFreeLearning}. Monta\~nez et al.\ proved that bias is necessary for a learning algorithm to perform better than uniform random sampling, and algorithmic bias was shown to encode trade-offs, such that no algorithm can be concurrently biased towards many distinct target sets. In this paper, we apply these properties of algorithmic bias to derive an upper bound on the level of bias encoded in a learning algorithm, in order to gain  insights on the expressivity of learning algorithms. 

Within statistical learning literature, there exists various measures characterizing algorithmic expressivity. For instance, the Vapnik-Chervonekis (VC) dimension \cite{vcDimension} provides a loose upper bound on algorithmic expressivity in general by characterizing the number of data points that can be exactly classified by the learning algorithm, for any possible labeling of the points. However, the disadvantages of the VC dimension include its inherent dependence on the dimensionality of the space on which the learning algorithm operates on~\cite{expressivityApplications}, as well as the fact that it is only restricted to classification problems. Building on the original VC dimension idea, Kearns and Schapire developed a generalization of the VC dimension with the Fat-shattering VC dimension by deriving dimension-free bounds with the assumption that the learning algorithm operates within a restricted space~\cite{fatVcDimension}. Further, Bartlett and Mendelson created Rademacher complexity as a more general measure of algorithmic expressivity by eliminating the assumption that learning algorithms are restricted within a particular distribution space~\cite{RademacherComplexityA}.

In this paper, we establish an alternative general measure of algorithmic expressivity based on the algorithmic search framework~\cite{montanez2017fof}. Because this search framework applies to clustering and optimization~\cite{montanez2017dissertation} as well as to the general machine learning problems considered in Vapnik's learning framework~\cite{vapnik1999overview}, such as classification, regression, and density estimation, theoretical derivations of the expressivity of search algorithms using this framework directly apply to the expressivity of many types of learning algorithms. 

\section{\uppercase{Search Framework}}

\subsection{The Search Problem}
We formulate machine learning problems as search problems using the algorithmic search framework \cite{montanez2017fof}. Within the framework, a search problem is represented as a 3-tuple $(\mathrm{\Omega}, T, F)$. The finite \textbf{search space} from which we can sample is $\mathrm{\Omega}$. The subset of elements in the search space that we are searching for is the \textbf{target set} $T$. A \textbf{target function} that represents $T$ is an $|\mathrm{\Omega}|$-length vector with entries having value 1 when the corresponding elements of $\mathrm{\Omega}$ are in the target set and 0 otherwise. The \textbf{external information resource} $F$ is a finite binary string that provides initialization information for the search and evaluates points in $\mathrm{\Omega}$, acting as an oracle that guides the search process. In learning scenarios this is typically a dataset with accompanying loss function.

\subsection{The Search Algorithm}
Given a search problem, a history of elements already examined, and information resource evaluations, an algorithmic search is a process that decides how to next query elements of $\mathrm{\Omega}$. As the search algorithm samples, it adds the record of points queried and information resource evaluations, indexed by time, to the search history. The algorithm uses the history to update its sampling distribution on $\mathrm{\Omega}$. An algorithm is successful if it queries an element $\omega \in T$ during the course of its search. Figure \ref{fig:jellyfish} visualizes the search process.

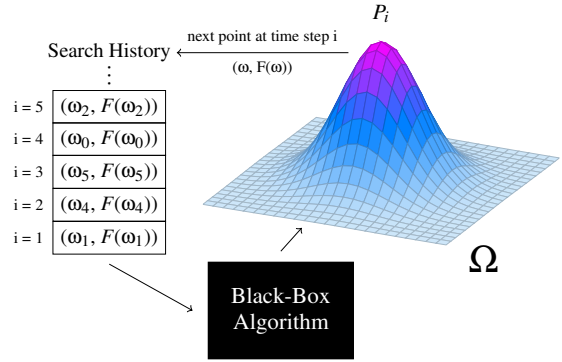
\begin{figure}
    \resizebox{7.5cm}{!}{\def\centerx{2}
\def\centery{-1}
\begin{tikzpicture}[scale=0.8]
    % Probability Distribution
    \begin{axis}[hide axis]
    \addplot3[surf, domain=-2:6,domain y=-5:3] 
        {exp(-( (x-\centerx)^2 + (y-\centery)^2)/3 )};
    \node[text centered] at (axis cs:\centerx, \centery, 1.20) {\large $P_i$};
    \end{axis}
    \node at (5.4,0) {\huge{$\mathrm{\Omega}$}};
    \draw[->] (\centerx + 0.8,4) -- node[above, text centered] {\scriptsize next point at time step i} node[below, text centered] {\scriptsize ($\omega$, F($\omega$))} (-0.5,4);
    % Black Box Algorithm
    \node[draw, fill=black!, text=white, text width=2cm, minimum height=1.5cm, text centered] at (1.5,-1) {Black-Box Algorithm};
    \draw[->] (1.5, 0.15) -- (1.9, 0.6  );
    % Search History
    \node[minimum width=1.65cm, text centered] at (-1.8,4.01) 
        {\small Search History}; 
     \foreach \y in {3.4,3.55, 3.7} % loop for dots
        \node[ minimum width=2cm, minimum height=0.3cm, text centered] at (-1.8,\y) {$\cdot$};
    \node[draw, minimum width=1.65cm, text centered] at (-1.8,2.95) 
        {\footnotesize ($\omega_2$, \textit{F}($\omega_2$))}; % each node is 0.515cm
    \node[text centered] at (-3.35,2.95) {\scriptsize i = 5};
    \node[draw, minimum width=1.65cm, text centered] at (-1.8,2.325) 
        {\footnotesize ($\omega_0$, \textit{F}($\omega_0$))};
    \node[text centered] at (-3.35,2.325) {\scriptsize i = 4};
    \node[draw, minimum width=1.65cm, text centered] at (-1.8,1.69) 
        {\footnotesize ($\omega_5$, \textit{F}($\omega_5$))};
    \node[text centered] at (-3.35,1.69) {\scriptsize i = 3};
    \node[draw, minimum width=1.65cm, text centered] at (-1.8,1.05) 
        {\footnotesize ($\omega_4$, \textit{F}($\omega_4$))};
    \node[text centered] at (-3.35,1.05) {\scriptsize i = 2};    
    \node[draw, minimum width=1.65cm, text centered] at (-1.8,0.41) 
        {\footnotesize ($\omega_1$, \textit{F}($\omega_1$))};
    \node[text centered] at (-3.35,0.41) {\scriptsize i = 1};
    \draw[->] (-1.8, -0.1) -- (-0.2, -1);
\end{tikzpicture}}
    \caption{As a black-box optimization algorithm samples from $\mathrm{\Omega}$, it produces an  associated probability distribution $P_i$ based on the search history. When a sample $\omega_k$ corresponding to location $k$ in $\mathrm{\Omega}$ is evaluated using the external information resource $F$, the tuple ($\omega_k$, $F(\omega_k)$) is added to the search history.}
    \label{fig:jellyfish}
\end{figure}

\subsection{Measuring Performance}
Following Monta\~nez, we measure a learning algorithm's performance using the expected per-query probability of success~\cite{montanez2017fof}. This quantity gives a normalized measure of performance compared to an algorithm's total probability of success, since the number of sampling steps may vary depending on the algorithm used and the particular run of the algorithm, which in turn effects the total probability of success. Furthermore, the per-query probability of success naturally accounts for sampling procedures that may involve repeatedly sampling the same points in the search space, as is the case with genetic algorithms \cite{goldberg1999genetic,reeves2002genetic}, allowing this measure to deftly handle search algorithms that manage trade-offs between exploration and exploitation. 

The expected per-query probability of success is defined as
\[ q(T,F) = \mathbb{E}_{\tilde{P}, H} \Bigg[ \frac{1}{|\tilde{P}|} \sum_{i=1}^{|\tilde{P}|} P_i(\omega \in T) \Bigg| F \Bigg] \]
where $\tilde{P}$ is a sequence of probability distributions over the search space (where each timestep \(i\) produces a distribution $P_i$), \(T\) is the target, \(F\) is the information resource, and \(H\) is the search history. The number of queries during a search is equal to the length of the probability distribution sequence,  $|\tilde{P}|$. The outer expectation accounts for stochastic differences in multiple runs of the algorithm, whereas the inner quantity is equivalent to the expected probability of success for a uniformly sampled time step of a given run.
    
\section{\uppercase{Bias}}
\noindent
In this section, we review the definition of bias introduced in \cite{futilityOfBiasFreeLearning} and restate some results related to that concept, showing the need for bias in learning algorithms.

\begin{definition}
    \label{def:bias_D}
    (Bias between a distribution over information resources and a fixed target) Let $\mathcal{D}$ be a distribution over a  space of information resources $\mathcal{F}$ and let $F \sim \mathcal{D}$. For a given  $\mathcal{D}$ and a fixed $k$-hot target function \(\bm{t}\) (corresponding to target set $t$),
    \begin{align*}
        \bias(\mathcal{D}, \bm{t})
        &= \E_{\mathcal{D}}[q(t,F)] - \frac{k}{|\Omega|}\\
        &= \mathbb{E}_{\mathcal{D}} \left[\bm{t}^\top \overline{P}_{F}\right] - \frac{\|\bm{t}\|^2}{|\Omega|} \\
        &= \bm{t}^\top \mathbb{E}_{\mathcal{D}}\left[\,\overline{P}_{F}\right] -    \frac{\|\bm{t}\|^2}{|\Omega|} \\
        &= \bm{t}^\top  \int_{\mathcal{F}} \overline{P}_{f} \mathcal{D}(f) \dif f - \frac{\|\bm{t}\|^2}{|\Omega|}
    \end{align*}
    where $\overline{P}_{f}$ is the vector representation of the averaged probability distribution (conditioned on $f$) induced on $\Omega$ during the course of the search, which implies $q(t,f) = \bm{t}^\top \overline{P}_{f}$.
\end{definition}

\begin{definition}
    \label{def:bias_B}
    (Bias between a finite set of information resources and a fixed target) Let $\mathcal{U}[\mathcal{B}]$ denote a uniform distribution over a finite set of information resources \(\mathcal{B}\). For a random quantity $F \sim \mathcal{U}[\mathcal{B}]$, the averaged \(|\Omega|\)-length simplex vector $\overline{P}_{F}$, and a fixed $k$-hot target function \(\bm{t}\), 
    \begin{align*}
        \bias(\mathcal{B}, \bm{t}) 
        &= \mathbb{E}_{\mathcal{U}[\mathcal{B}]}[\bm{t}^\top \overline{P}_{F}] - \frac{k}{|\Omega|} \\
        &= \bm{t}^\top \mathbb{E}_{\mathcal{U}[\mathcal{B}]}[\overline{P}_{F}] - \frac{k}{|\Omega|} \\
        &= \bm{t}^\top \left( \frac{1}{|\mathcal{B}|}\sum_{f \in \mathcal{B}} \overline{P}_{f} \right) - \frac{\|\bm{t}\|^{2}}{|\Omega|}.
    \end{align*}
\end{definition}

\begin{restatable}[Improbability of Favorable Information Resources]{theorem}{iofir}
    Let $\mathcal{D}$ be a distribution over a set of information resources $\mathcal{F}$, let $F$ be a random variable such that $F \sim \mathcal{D}$, let $t \subseteq \Omega$ be an arbitrary fixed $k$-sized target set with corresponding target function $\bm{t}$, and let $q(t,F)$ be the expected per-query probability of success for algorithm $\mathcal{A}$ on search problem $(\Omega,t,F)$. Then, for any $q_{\mathrm{min}} \in [0,1]$,
    \begin{align*}
        \Pr(q(t, F) \geq q_\mathrm{min})  &\leq \frac{p + \bias(\mathcal{D}, \bm{t})}{q_{\mathrm{min}}}
    \end{align*}
    where $p = \frac{k}{|\Omega|}$.
    \label{thm:iofir}
\end{restatable}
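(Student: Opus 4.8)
The plan is to recognize the stated inequality as a direct application of Markov's inequality to the random variable $q(t,F)$. The argument rests on just two facts: that $q(t,F)$ is nonnegative, and that its expectation under $\mathcal{D}$ equals $p + \bias(\mathcal{D}, \bm{t})$.

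First I would observe that $q(t,F)$ is a nonnegative random variable. Since $q(t,F)$ is defined as an expected per-query probability of success, it is an average of probabilities $P_i(\omega \in t)$ and therefore takes values in $[0,1]$; in particular $q(t,F) \geq 0$ for every realization of $F$. This is the only structural property of the performance measure that the proof actually needs.

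Next I would pin down the mean. Definition~\ref{def:bias_D} states that $\bias(\mathcal{D}, \bm{t}) = \E_{\mathcal{D}}[q(t,F)] - k/|\Omega|$, so rearranging immediately yields $\E_{\mathcal{D}}[q(t,F)] = p + \bias(\mathcal{D}, \bm{t})$, with $p = k/|\Omega|$. With the expectation identified, applying Markov's inequality to the nonnegative random variable $q(t,F)$ at threshold $q_{\mathrm{min}} > 0$ gives
\[ \Pr(q(t,F) \geq q_{\mathrm{min}}) \leq \frac{\E_{\mathcal{D}}[q(t,F)]}{q_{\mathrm{min}}} = \frac{p + \bias(\mathcal{D}, \bm{t})}{q_{\mathrm{min}}}, \]
which is exactly the claimed bound.

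I do not expect any substantive obstacle here: the entire content is the pairing of the definition of bias with Markov's inequality. The only point needing minor care is the boundary case $q_{\mathrm{min}} = 0$, which is degenerate — the event $\{q(t,F) \geq 0\}$ has probability one while the right-hand side is read as $+\infty$ — so the inequality holds vacuously there, and the substantive regime is $q_{\mathrm{min}} > 0$.
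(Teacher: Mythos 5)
Your proposal is correct: the theorem is one this paper restates from the cited prior work without reproving it in the appendix, and the standard proof there is exactly your argument --- identify $\E_{\mathcal{D}}[q(t,F)] = p + \bias(\mathcal{D},\bm{t})$ from Definition~\ref{def:bias_D} and apply Markov's inequality to the nonnegative random variable $q(t,F)$. Your handling of the degenerate case $q_{\mathrm{min}} = 0$ is a fine extra touch; nothing is missing.
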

\noindent

\begin{restatable}[Conservation of Bias]{theorem}{conservation}
    Let $\mathcal{D}$ be a distribution over a set of information resources and let $\tau_{k} = \{ \bm{t} | \bm{t} \in \{ 0, 1 \}^{ |\Omega| }, ||\bm{t}|| = \sqrt{k} \}$ be the set of all $|\Omega|$-length $k$-hot vectors. Then for any fixed algorithm $\mathcal{A}$, 
    \begin{align*}
        \sum_{\bm{t} \in \tau_{k}} \bias(\mathcal{D},\bm{t}) = 0
    \end{align*}
    \label{thm:consbias}
\end{restatable}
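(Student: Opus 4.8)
The plan is to reduce the statement to an elementary combinatorial identity by exploiting the linearity of bias in the target function together with the symmetry of the set $\tau_k$. First I would write $n = |\Omega|$ and set $\overline{P} := \E_{\mathcal{D}}[\overline{P}_F]$, the single fixed point on the $n$-simplex whose coordinates are nonnegative and sum to $1$. By Definition~\ref{def:bias_D}, and using that every $k$-hot vector satisfies $\|\bm{t}\|^2 = k$, each summand becomes $\bias(\mathcal{D},\bm{t}) = \bm{t}^\top \overline{P} - k/n$. Splitting the sum over $\tau_k$ then gives
\[
\sum_{\bm{t} \in \tau_k} \bias(\mathcal{D},\bm{t}) = \left(\sum_{\bm{t}\in\tau_k} \bm{t}^\top\overline{P}\right) - |\tau_k|\,\frac{k}{n},
\]
where $|\tau_k| = \binom{n}{k}$ is the number of $k$-hot vectors.

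The key step is to evaluate the first sum by interchanging the order of summation. Writing $\bm{t}^\top\overline{P} = \sum_{i=1}^{n} t_i\,\overline{P}_i$, I would rearrange to obtain $\sum_{\bm{t}\in\tau_k}\bm{t}^\top\overline{P} = \sum_{i=1}^{n} \overline{P}_i\bigl(\sum_{\bm{t}\in\tau_k} t_i\bigr)$. The inner quantity counts how many $k$-hot vectors carry a $1$ in coordinate $i$; by the symmetry of $\tau_k$ this count is independent of $i$ and equals $\binom{n-1}{k-1}$ (fix the $1$ in position $i$ and distribute the remaining $k-1$ ones among the other $n-1$ coordinates). Pulling this constant out of the sum and invoking the normalization $\sum_{i=1}^{n}\overline{P}_i = 1$ yields $\sum_{\bm{t}\in\tau_k}\bm{t}^\top\overline{P} = \binom{n-1}{k-1}$.

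It then remains only to confirm the cancellation $\binom{n-1}{k-1} = \binom{n}{k}\cdot\frac{k}{n}$, which is immediate from the standard identity $\binom{n}{k} = \frac{n}{k}\binom{n-1}{k-1}$. The two terms in the display are therefore equal, their difference is zero, and the result follows. I expect the only genuine obstacle to be the symmetry/counting argument establishing $\sum_{\bm{t}\in\tau_k} t_i = \binom{n-1}{k-1}$; once this is in hand the remainder is just normalization and a one-line binomial manipulation. It is worth emphasizing that the argument uses nothing about $\overline{P}$ beyond its entries being nonnegative and summing to one, so the conservation identity holds for \emph{every} fixed algorithm $\mathcal{A}$ and \emph{every} distribution $\mathcal{D}$, exactly as claimed.
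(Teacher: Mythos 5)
Your proof is correct, and there is nothing in this paper to compare it against: Conservation of Bias is one of the results restated here from the cited prior work \cite{futilityOfBiasFreeLearning}, and unlike the new theorems it receives no proof in the Appendix. Your argument --- linearity of $\bias(\mathcal{D},\bm{t}) = \bm{t}^\top\E_{\mathcal{D}}[\overline{P}_F] - k/|\Omega|$ in $\bm{t}$, swapping the order of summation so that $\sum_{\bm{t}\in\tau_k}\bm{t}^\top\overline{P} = \binom{|\Omega|-1}{k-1}\sum_i \overline{P}_i = \binom{|\Omega|-1}{k-1}$, and the identity $\binom{|\Omega|-1}{k-1} = \frac{k}{|\Omega|}\binom{|\Omega|}{k}$ --- is exactly the standard counting proof given in that reference, and your observation that it needs only that $\E_{\mathcal{D}}[\overline{P}_F]$ lies on the simplex is sound.
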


\begin{restatable}[Famine of Favorable Information Resources]{theorem}{fofir}
    Let $\mathcal{B}$ be a finite set of information resources and let $t \subseteq \Omega$ be an arbitrary fixed $k$-size target set with corresponding target function $\bm{t}$. Define 
    \begin{align*}
        \mathcal{B}_{q_{\mathrm{min}}} &= \{f \mid f \in \mathcal{B}, q(t,f) \geq q_{\mathrm{min}} \},
    \end{align*}
    where $q(t,f)$ is the expected per-query probability of success for algorithm $\mathcal{A}$ on search problem $(\Omega, t,f)$ and $q_{\mathrm{min}} \in [0,1]$ represents the minimum acceptable per-query probability of success. Then,
    \begin{align*}
        \frac{|\mathcal{B}_{q_{\mathrm{min}}}|}{|\mathcal{B}|} &\leq \frac{p +  \bias(\mathcal{B}, \bm{t})}{q_{\mathrm{min}}}
    \end{align*}
    where $p = \frac{k}{|\Omega|}$.
    \label{thm:fofir}
\end{restatable}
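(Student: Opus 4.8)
The plan is to recognize this statement as a finite-set specialization of the Markov-inequality argument underlying Theorem~\ref{thm:iofir}, now applied to the uniform distribution $\mathcal{U}[\mathcal{B}]$ over $\mathcal{B}$. The central observation is that, when $F \sim \mathcal{U}[\mathcal{B}]$, the probability that a favorable resource is drawn coincides exactly with a counting ratio:
\[
\Pr\big(q(t,F) \geq q_{\mathrm{min}}\big) = \frac{|\{f \in \mathcal{B} : q(t,f) \geq q_{\mathrm{min}}\}|}{|\mathcal{B}|} = \frac{|\mathcal{B}_{q_{\mathrm{min}}}|}{|\mathcal{B}|}.
\]
Hence bounding the cardinality ratio is the same task as bounding this probability.

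First I would note that $q(t,F) = \bm{t}^\top \overline{P}_F \in [0,1]$ is a nonnegative random variable, so Markov's inequality applies and yields
\[
\Pr\big(q(t,F) \geq q_{\mathrm{min}}\big) \leq \frac{\E_{\mathcal{U}[\mathcal{B}]}[q(t,F)]}{q_{\mathrm{min}}}.
\]
Next I would compute the expectation in the numerator directly from Definition~\ref{def:bias_B}: since $q(t,F) = \bm{t}^\top \overline{P}_F$, rearranging the definition of $\bias(\mathcal{B}, \bm{t})$ gives
\[
\E_{\mathcal{U}[\mathcal{B}]}[q(t,F)] = \bm{t}^\top \E_{\mathcal{U}[\mathcal{B}]}[\overline{P}_F] = \bias(\mathcal{B}, \bm{t}) + \frac{k}{|\Omega|} = \bias(\mathcal{B}, \bm{t}) + p.
\]
Substituting this into the Markov bound and using the counting identity on the left-hand side then delivers the claimed inequality.

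There is no genuine obstacle here; the argument is a clean Markov estimate. The only point requiring care is the bookkeeping linking the three quantities: the event probability must be correctly identified with the ratio $|\mathcal{B}_{q_{\mathrm{min}}}|/|\mathcal{B}|$ (which relies on $F$ being uniform on $\mathcal{B}$), and the expectation must be correctly re-expressed through the bias term via Definition~\ref{def:bias_B}. In fact, this theorem can equivalently be obtained as an immediate corollary of Theorem~\ref{thm:iofir} by instantiating $\mathcal{D} = \mathcal{U}[\mathcal{B}]$, since under that choice the probability $\Pr(q(t,F) \geq q_{\mathrm{min}})$ collapses to precisely the cardinality ratio $|\mathcal{B}_{q_{\mathrm{min}}}|/|\mathcal{B}|$.
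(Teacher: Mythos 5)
Your proof is correct: identifying $|\mathcal{B}_{q_{\mathrm{min}}}|/|\mathcal{B}|$ with $\Pr(q(t,F) \geq q_{\mathrm{min}})$ under $F \sim \mathcal{U}[\mathcal{B}]$, applying Markov's inequality to the nonnegative variable $q(t,F) = \bm{t}^\top \overline{P}_F$, and evaluating $\E_{\mathcal{U}[\mathcal{B}]}[q(t,F)] = p + \bias(\mathcal{B},\bm{t})$ via Definition~\ref{def:bias_B} is exactly the standard argument, equivalent to instantiating Theorem~\ref{thm:iofir} at $\mathcal{D} = \mathcal{U}[\mathcal{B}]$. Note that this paper itself supplies no proof --- the theorem is restated from the cited prior work on bias-free learning --- but your route matches that original proof (which writes the same Markov step as a direct counting bound, $q_{\mathrm{min}}|\mathcal{B}_{q_{\mathrm{min}}}| \leq \sum_{f \in \mathcal{B}} q(t,f)$), with the only implicit caveat, shared by the theorem statement, being $q_{\mathrm{min}} > 0$.
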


\begin{restatable}[Futility of Bias-Free Search]{theorem}{futility}\label{thm:futility}
    For any fixed algorithm $\mathcal{A}$, fixed target $t \subseteq \Omega$ with corresponding target function $\bm{t}$, and distribution over information resources $\mathcal{D}$, if $\bias(\mathcal{D}, \bm{t}) = 0$, then
    \begin{align*}
        \Pr(\omega \in t; \mathcal{A}) &= p
    \end{align*}
    where $\Pr(\omega \in t; \mathcal{A})$ represents the single-query probability of successfully sampling an element of $t$ using $\mathcal{A}$, marginalized over information resources $F \sim \mathcal{D}$, and $p$ is the single-query probability of success under uniform random sampling.
\end{restatable}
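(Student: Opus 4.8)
The plan is to reduce the claim to the definition of bias via Definition~\ref{def:bias_D}. First I would unpack the quantity $\Pr(\omega \in t; \mathcal{A})$. Conditioned on a fixed information resource $f$, the algorithm induces the averaged sampling distribution $\overline{P}_f$ over $\Omega$, and the probability that a single query drawn according to this averaged distribution lands in the target set $t$ is exactly $\bm{t}^\top \overline{P}_f$, which by the framework equals $q(t,f)$. Marginalizing over $F \sim \mathcal{D}$ then gives
\begin{align*}
    \Pr(\omega \in t; \mathcal{A}) = \E_{\mathcal{D}}\!\left[\bm{t}^\top \overline{P}_F\right] = \E_{\mathcal{D}}[q(t,F)].
\end{align*}

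Next I would invoke the first line of Definition~\ref{def:bias_D}, namely $\bias(\mathcal{D}, \bm{t}) = \E_{\mathcal{D}}[q(t,F)] - \tfrac{k}{|\Omega|}$. Substituting the hypothesis $\bias(\mathcal{D}, \bm{t}) = 0$ immediately yields $\E_{\mathcal{D}}[q(t,F)] = \tfrac{k}{|\Omega|}$. Since $p = \tfrac{k}{|\Omega|}$ is precisely the single-query probability of success under uniform random sampling (each of the $k$ target elements is hit with probability $1/|\Omega|$), combining this with the identity from the first step gives $\Pr(\omega \in t; \mathcal{A}) = p$, completing the argument.

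The routine calculations here are essentially trivial; the only genuine obstacle is the conceptual identification in the first step, i.e.\ justifying that the marginalized single-query success probability coincides with $\E_{\mathcal{D}}[q(t,F)]$. This requires care with the definitions of the search framework: one must argue that a single query behaves, in expectation over the search history $H$ and the sequence of distributions $\tilde{P}$, like a draw from the averaged distribution $\overline{P}_f$, so that its target-hitting probability is $\bm{t}^\top \overline{P}_f = q(t,f)$. Once this bridge between the single-query and per-query viewpoints is in place, the result follows directly from the vanishing of the bias, and no appeal to the stronger concentration bounds of Theorem~\ref{thm:iofir} or Theorem~\ref{thm:fofir} is needed.
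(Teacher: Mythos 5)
Your proposal is correct: this paper merely restates Theorem~\ref{thm:futility} from the cited prior work and provides no proof of it in its appendix, but your argument --- writing $\Pr(\omega \in t; \mathcal{A}) = \E_{\mathcal{D}}\left[\bm{t}^\top \overline{P}_F\right] = \E_{\mathcal{D}}[q(t,F)]$ and then substituting $\bias(\mathcal{D}, \bm{t}) = 0$ via the first line of Definition~\ref{def:bias_D} --- is exactly the standard direct derivation used in the original source. You also correctly isolate the only non-trivial step, namely the definitional identification of the marginalized single-query success probability (a query at a uniformly sampled time step, averaged over histories $H$ and runs $\tilde{P}$) with $\bm{t}^\top \overline{P}_f = q(t,f)$, after which linearity of expectation and the vanishing bias finish the argument.
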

\section{\uppercase{Main Results}}

\noindent Having reviewed the definitions of bias and prior results related to it, we now present our own results, with full proofs given in the Appendix. We proceed by presenting new results regarding bias and defining entropic expressivity. We explore expressivity in relation to bias, demonstrating a trade-off between them.

\begin{restatable}[Bias Upper Bound]{theorem}{biasUpperBound}
    \label{thm:biasUpperBound}
    Let $\tau_{k} = \{ \bm{t} | \bm{t} \in \{ 0, 1 \}^{ |\Omega| }, ||\bm{t}|| = \sqrt{k} \}$ be the set of all $|\Omega|$-length $k$-hot vectors and let $\mathcal{B}$ be a finite set of information resources. Then,
    \begin{align*}
        \lub{\mathbf{t} \in \tau_k} \bias(\mathcal{B}, \mathbf{t}) &\leq \bigg(\frac{p-1}{p}\bigg)\inf_{\mathbf{t} \in \tau_k} \bias(\mathcal{B},\mathbf{t})
    \end{align*}
    where $p = \frac{k}{|\Omega|}$.
\end{restatable}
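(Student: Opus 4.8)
The plan is to reduce everything to the single averaged distribution $\overline{P} = \frac{1}{|\mathcal{B}|}\sum_{f \in \mathcal{B}} \overline{P}_{f}$ supplied by Definition~\ref{def:bias_B}. For a $k$-hot $\mathbf{t}$ we have $\bias(\mathcal{B},\mathbf{t}) = \mathbf{t}^\top \overline{P} - p$, and $\mathbf{t}^\top \overline{P}$ is simply the sum of the $k$ coordinates of $\overline{P}$ picked out by $\mathbf{t}$. Because $\tau_k$ is finite the supremum and infimum are attained: the largest bias is achieved by the target $\mathbf{t}^{+}$ selecting the $k$ largest coordinates of $\overline{P}$ (call their sum $M$), and the smallest bias by the target selecting the $k$ smallest (call their sum $m$), so that $\sup_{\mathbf{t} \in \tau_k}\bias(\mathcal{B},\mathbf{t}) = M - p$ and $\inf_{\mathbf{t} \in \tau_k}\bias(\mathcal{B},\mathbf{t}) = m - p$.

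The central idea is a conservation argument over a partition, in the spirit of Theorem~\ref{thm:consbias}. If $\Omega$ is partitioned into $|\Omega|/k$ disjoint $k$-hot targets $\mathbf{t}_1,\dots,\mathbf{t}_{|\Omega|/k}$, then each coordinate of $\overline{P}$ is counted exactly once, so $\sum_{j}\mathbf{t}_j^\top \overline{P} = 1$ and therefore $\sum_{j}\bias(\mathcal{B},\mathbf{t}_j) = 1 - \frac{|\Omega|}{k}\,p = 0$. I would take such a partition whose first block is $\mathbf{t}^{+}$; the remaining $\frac{|\Omega|}{k}-1$ blocks lie in $\tau_k$, and so each has bias at least $\inf_{\mathbf{t} \in \tau_k}\bias(\mathcal{B},\mathbf{t})$. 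Hence $0 \ge \sup_{\mathbf{t} \in \tau_k}\bias(\mathcal{B},\mathbf{t}) + \big(\tfrac{|\Omega|}{k}-1\big)\inf_{\mathbf{t} \in \tau_k}\bias(\mathcal{B},\mathbf{t})$, and since $\tfrac{p-1}{p} = 1 - \tfrac{|\Omega|}{k} = -\big(\tfrac{|\Omega|}{k}-1\big)$, rearranging yields exactly the stated bound.

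The step I expect to be the main obstacle is that such a partition exists only when $k$ divides $|\Omega|$. To handle arbitrary parameters I would drop down to the coordinate level: writing $R = 1 - M$ for the mass that $\overline{P}$ places outside the top-$k$ block, the target inequality is equivalent to $k\,R \ge (|\Omega|-k)\,m$. Here $R$ is the sum of the $|\Omega|-k$ smallest coordinates while $m$ is the sum of the smallest $k$ among them, so (using $|\Omega| \ge 2k$, which makes the $m$-coordinates a genuine sub-collection of the $R$-coordinates) the average coordinate contributing to $m$ is at most the average coordinate contributing to $R$; clearing denominators gives $k\,R \ge (|\Omega|-k)\,m$ and hence the theorem. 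Verifying this averaging inequality, and pinning down the mild size condition it quietly requires, is the delicate part that replaces the divisibility hypothesis in the general case.
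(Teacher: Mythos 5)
Your proposal is correct wherever the theorem itself is correct, and it takes a genuinely different route from the paper. The paper proves the key inequality $M \leq 1 - \big(\tfrac{1-p}{p}\big)m$ (with $m, M$ the infimum and supremum of $\mathbf{t}^\top \overline{P}$ over $\tau_k$) by contradiction: if some target $\mathbf{t}$ carried too much mass, its complement $\mathbf{s}$ would carry mass less than $\tfrac{1-p}{p}\,m$, and an inductive lemma (any $n$-sized set of total mass $M_S$ contains a $k$-sized subset of mass at most $\tfrac{k}{n}M_S$) would then yield a $k$-hot target with mass strictly below the infimum. Your argument replaces the contradiction and the induction with direct computation: the partition/conservation identity when $k$ divides $|\Omega|$, and in general the sorted-coordinate comparison $\tfrac{m}{k} \leq \tfrac{R}{|\Omega|-k}$, which is immediate once the $k$ smallest coordinates are a subcollection of the $|\Omega|-k$ smallest. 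Your reduction to top-$k$/bottom-$k$ selections is sound ($\tau_k$ is finite and $\mathbf{t} \mapsto \mathbf{t}^\top\overline{P}$ is extremized by selecting the $k$ largest or smallest coordinates), and your rearrangement showing the theorem is equivalent to $kR \geq (|\Omega|-k)\,m$ checks out, so on its domain your proof is complete and arguably more elementary than the paper's.

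The condition $|\Omega| \geq 2k$ that you flagged as the delicate point is not an artifact of your method: the theorem as stated is false without it. Take $|\Omega| = 3$, $k = 2$ (so $p = 2/3$) and $\overline{P} = (0.5, 0.3, 0.2)$, which is realizable, e.g., by an algorithm that ignores its information resource and always samples from this fixed distribution. Then $\sup_{\mathbf{t}\in\tau_k} \bias(\mathcal{B},\mathbf{t}) = 0.8 - \tfrac{2}{3} = \tfrac{2}{15}$, while $\big(\tfrac{p-1}{p}\big)\inf_{\mathbf{t}\in\tau_k}\bias(\mathcal{B},\mathbf{t}) = \big({-\tfrac{1}{2}}\big)\big(0.5 - \tfrac{2}{3}\big) = \tfrac{1}{12} < \tfrac{2}{15}$. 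The paper's own proof hides exactly the hypothesis you identified: it invokes its offset lemma to extract a $k$-sized proper subset of the complementary target set, which has only $|\Omega|-k$ elements, so the extraction is possible only when $k < |\Omega| - k$ (the boundary case $k = |\Omega|-k$ holds trivially by taking the whole complement, where your averaging inequality likewise holds with equality). So with the hypothesis $k \leq |\Omega|/2$ (equivalently $p \leq 1/2$) made explicit, your proof is complete and in fact more careful than the paper's; for $p > 1/2$ no amount of additional work can close the gap, because the claimed bound simply fails there.
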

Theorem~\ref{thm:biasUpperBound} confirms the intuition that the bounds on the maximum and minimum values the bias can take over all possible target sets are related by at most a constant factor. Note that from this theorem we can also derive a lower bound on the infimum of the bias by simply dividing by the constant factor.

We also consider the bound's behavior as p varies in Figure \ref{fig:change_p}. As $p$ increases, which can only happen as the size of the target set $k$ increases relative to the size of $\rm{\Omega}$, the upper bound on bias tightens. This is because if the target set size is a great proportion of the search space, it is more likely that the algorithm will do well on a greater number of target sets. Thus, it will be less biased towards any given one of them, by conservation of bias (Theorem~\ref{thm:consbias}).

\begin{figure}
    \centering
    \scalebox{0.8}{\definecolor{lightblue}{RGB}{205, 230, 249}
\definecolor{darkblue}{RGB}{0, 90, 180}
\definecolor{purple}{RGB}{0, 38, 77}

\begin{tikzpicture}
\begin{axis}%
    [
        % x= 5cm,
        % y=5cm,
        xtick={-10,-9,...,10},   
        xmin=0,
        xmax=1,
        xlabel=$p$,
        % axis x line=middle,
        ytick={-10,-9,...,10},
        tick label style={font=\tiny},
        ymin=0,
        ymax=1,
        ylabel={\scriptsize Upper Bound of Bias($D,t$)},
        %y label style={at={(axis description cs:-0.1,.5)},rotate=90,anchor=south}
        %no markers,
        samples=300,
        domain=0:1,
    ]
    \addplot[color = lightblue, thick] {((x-1)/x)*-0.3};
    \addplot[color = darkblue, thick] {((x-1)/x)*-0.1};
    \addplot[color = purple, thick] {((x-1)/x)*-0.01};
    
    \addlegendentry{$m = -0.3$}
    \addlegendentry{$m = -0.1$}
    \addlegendentry{$m = -0.01$}
\end{axis} 
\end{tikzpicture}}
    \caption{This graph shows how the upper bound of the supremum of the bias over all possible target sets of size $k$ varies with different values of $p$, for different values of $m = \frac{p-1}{p}$. }
    \label{fig:change_p}
\end{figure}
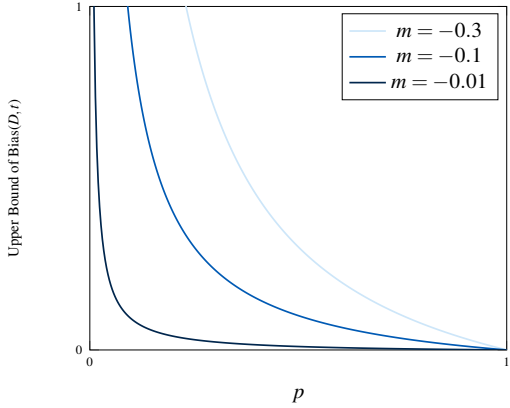

\begin{restatable}[Difference Between Estimated and Actual Bias]{theorem}{biasDifference}
    \label{def:biasDifference}
    Let $\bm{t}$ be a fixed target function, let $\mathcal{D}$ be a distribution over a set of information resources $\mathcal{B}$, and let $X = \{X_1, \dots, X_n\}$ be a finite sample independently drawn from $\mathcal{D}$. Then, 
    \begin{align*}
        \mathbbm{P}(|\bias(X, \bm{t}) - \bias(\mathcal{D}, \bm{t})| \geq \epsilon) 
            &\leq 2 e^{-2 n\epsilon^2}.
    \end{align*}
\end{restatable}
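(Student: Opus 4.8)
The plan is to recognize this as a direct application of Hoeffding's inequality, once the difference between the empirical and true bias has been reduced to the deviation of a bounded sample mean from its expectation.

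First I would expand both quantities using the definitions. Interpreting $\bias(X, \bm{t})$ as the bias with respect to the empirical (uniform-over-sample) distribution, in the spirit of Definition~\ref{def:bias_B}, gives
\[
\bias(X, \bm{t}) = \bm{t}^\top \left(\frac{1}{n}\sum_{i=1}^n \overline{P}_{X_i}\right) - \frac{\|\bm{t}\|^2}{|\Omega|},
\]
while Definition~\ref{def:bias_D} gives $\bias(\mathcal{D}, \bm{t}) = \bm{t}^\top \mathbb{E}_{\mathcal{D}}[\overline{P}_F] - \frac{\|\bm{t}\|^2}{|\Omega|}$. The normalization terms $\frac{\|\bm{t}\|^2}{|\Omega|}$ are identical in both expressions and cancel in the difference, leaving
\[
\bias(X, \bm{t}) - \bias(\mathcal{D}, \bm{t}) = \frac{1}{n}\sum_{i=1}^n \bm{t}^\top \overline{P}_{X_i} - \mathbb{E}_{\mathcal{D}}\!\left[\bm{t}^\top \overline{P}_F\right].
\]

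Next I would define $Y_i := \bm{t}^\top \overline{P}_{X_i} = q(t, X_i)$ and verify the two hypotheses needed for Hoeffding's inequality. Because the $X_i$ are drawn independently from $\mathcal{D}$, the $Y_i$ are i.i.d.\ with common mean $\mathbb{E}[Y_i] = \mathbb{E}_{\mathcal{D}}[\bm{t}^\top \overline{P}_F]$, so the displayed difference is exactly the centered sample mean $\frac{1}{n}\sum_{i=1}^n Y_i - \mathbb{E}[Y_1]$. Moreover, each $Y_i$ is the per-query success probability $q(t, X_i)$, which, being the inner product of a $0/1$ target vector $\bm{t}$ with a point $\overline{P}_{X_i}$ of the probability simplex, lies in $[0, 1]$.

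Finally I would invoke Hoeffding's inequality for the bounded i.i.d.\ variables $Y_1, \dots, Y_n$ with $a_i = 0$ and $b_i = 1$, so that $\sum_{i=1}^n (b_i - a_i)^2 = n$, yielding
\[
\Pr\!\left(\left|\frac{1}{n}\sum_{i=1}^n Y_i - \mathbb{E}[Y_1]\right| \geq \epsilon\right) \leq 2\exp\!\left(-\frac{2n^2\epsilon^2}{n}\right) = 2e^{-2n\epsilon^2},
\]
which is precisely the claimed bound. The argument is essentially immediate; the only step requiring any care is confirming that each $Y_i$ lies in $[0,1]$, which follows from its interpretation as a probability of success, so I do not anticipate a substantive obstacle beyond correctly setting up the Hoeffding application.
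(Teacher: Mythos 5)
Your proposal is correct and matches the paper's own proof essentially step for step: both reduce the difference to the centered sample mean of the i.i.d.\ variables $\bm{t}^\top \overline{P}_{X_i} = q(t, X_i) \in [0,1]$ and apply Hoeffding's inequality (the paper packages this as $\overline{B}_X = \bias(X,\bm{t}) + p$ with $\E[\overline{B}_X] = \bias(\mathcal{D},\bm{t}) + p$, which is the same cancellation of the $p = \|\bm{t}\|^2/|\Omega|$ term you perform). If anything, your verification that each individual summand lies in $[0,1]$ is slightly more careful than the paper's statement of the boundedness condition, since Hoeffding requires per-variable bounds.
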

This theorem bounds the difference in the bias defined with respect to a distribution over information resources, $\bias(\mathcal{D}, \bm{t})$, and the bias defined on a finite set of information resources sampled from $\D$. In practice, we may not have access to the underlying distribution of information resources but we may be able to sample from such an unknown distribution. This theorem tells us how close empirically computed values of bias will be to the true value of bias, with high probability. 

\begin{definition}[Entropic Expressivity]
    \label{def:expressivity}
    Given a distribution over information resources $\mathcal{D}$, we define the \textit{entropic expressivity} of a search algorithm as the information-theoretic entropy of the averaged strategy distributions over $\mathcal{D}$, namely,
    \begin{align*}
        H(\overline{P}_{\D}) 
            &= H\big(\mathbbm{E}_{\D}[\overline{P}_{F}]\big) \\
            &= H(\mathcal{U}) - D_{\mathrm{KL}}(\overline{P}_{\D} \;||\; \mathcal{U})
    \end{align*}
    where $F \sim \D$ and the quantity $D_{\mathrm{KL}}(\overline{P}_{\D} \;||\; \mathcal{U})$ is the Kullback-Leibler divergence between distribution $\overline{P}_{\D}$ and the uniform distribution $\mathcal{U}$, both being distributions over search space $\Omega$.
\end{definition}
Definition~\ref{def:expressivity} uses the standard information-theoretic entropy for discrete probability mass functions, $H(\cdot)$. Our notion of expressivity characterizes the flexibility of an algorithm by measuring the entropy of its induced probability vectors (strategies) averaged over the distribution on information resources.  Algorithms that place probability mass on many different regions of the search space will tend to have a more uniform averaged probability vector. Entropic expressivity captures this key aspect of the flexibility of an algorithm.

We now present results relating this notion of expressivity to algorithmic bias.

\begin{restatable}[Expressivity Bounded by Bias]{theorem}{expressivityBiasBound}
    \label{thm:expressivityBiasBound}
    Given a fixed $k$-hot  target function $\bm{t}$ and a distribution over information resources $\D$, the entropic expressivity of a search algorithm can be bounded in terms of $\epsilon := \bias(\D, \bm{t})$, by
    \begin{align*}
        H(\overline{P}_{\D}) \in 
            &\bigg[H(p + \epsilon), \bigg((p + \epsilon) \log_2 \bigg(\frac{k}{p + \epsilon}\bigg) \\
            &+ (1 - (p + \epsilon)) \log_2 \bigg(\frac{|\Omega| - k}{1 - (p + \epsilon)}\bigg)\bigg)\bigg].
    \end{align*}
\end{restatable}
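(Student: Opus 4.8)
The plan is to recognize that the bias directly pins down the total probability mass that $\overline{P}_{\D}$ places on the target set, and then to apply the grouping (chain-rule) decomposition of entropy to separate the coarse-grained ``target vs.\ non-target'' split from the fine-grained distribution within each block. First I would unpack the definition of bias: writing $\overline{P}_{\D} := \E_{\D}[\overline{P}_{F}]$, Definition~\ref{def:bias_D} gives $\bias(\D,\bm{t}) = \bm{t}^\top \overline{P}_{\D} - p$, so that $\bm{t}^\top \overline{P}_{\D} = p + \epsilon$. Since $\bm{t}$ is the $k$-hot indicator of the target set, the quantity $\bm{t}^\top \overline{P}_{\D}$ is exactly the total mass assigned to the $k$ target elements; hence $\overline{P}_{\D}$ puts mass $a := p + \epsilon$ on the target block and mass $1 - a$ on the $|\Omega| - k$ non-target elements.

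Next I would introduce the indicator random variable $Y = \mathbbm{1}[\,\omega \in t\,]$ for $\omega$ drawn according to $\overline{P}_{\D}$, and invoke the standard decomposition
\[
    H(\overline{P}_{\D}) = H(Y) + a\,H(\overline{P}_{\D} \mid \omega \in t) + (1-a)\,H(\overline{P}_{\D} \mid \omega \notin t),
\]
where $H(Y) = H(p + \epsilon)$ is the binary entropy of the target/non-target split, and the two conditional terms are the entropies of the normalized restrictions of $\overline{P}_{\D}$ to the target block ($k$ elements) and the non-target block ($|\Omega| - k$ elements), respectively.

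For the lower bound I would use only that conditional entropies are non-negative: dropping both conditional terms leaves $H(\overline{P}_{\D}) \geq H(p + \epsilon)$, attained when each block concentrates its mass on a single point. For the upper bound I would apply the maximum-entropy bound on each block, $H(\overline{P}_{\D}\mid \omega \in t) \leq \log_2 k$ and $H(\overline{P}_{\D}\mid \omega \notin t) \leq \log_2(|\Omega| - k)$, attained by uniform conditionals. Substituting these, expanding $H(a) = -a\log_2 a - (1-a)\log_2(1-a)$, and collecting the $\log_2 k$ and $\log_2(|\Omega|-k)$ terms yields
\[
    (p+\epsilon)\log_2\!\bigg(\frac{k}{p+\epsilon}\bigg) + \big(1-(p+\epsilon)\big)\log_2\!\bigg(\frac{|\Omega|-k}{1-(p+\epsilon)}\bigg),
\]
matching the stated upper endpoint.

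The argument is essentially routine once the grouping identity is in place; the only point requiring care is the bookkeeping in the final algebraic collapse of the upper bound, where the two binary-entropy terms must combine cleanly with the two log-of-cardinality terms to produce the compact form above. I would also note in passing that both endpoints are achievable---concentration within each block for the minimum, uniformity within each block for the maximum---so the interval is the tightest possible given only the value of the bias.
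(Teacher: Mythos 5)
Your proof is correct, but it takes a genuinely different route from the paper's. The paper argues by explicit construction: subject to placing mass $p+\epsilon$ on the target set, it exhibits the distribution maximizing the KL divergence from uniform (all target mass on a single element, all complement mass on a single element) and the one minimizing it (uniform within each block), and computes the entropy of each to obtain the two endpoints. Strictly speaking, that argument establishes achievability of the endpoints but only asserts, without proof, that these constructions are extremal among all distributions satisfying $\bm{t}^\top \overline{P}_{\mathcal{D}} = p+\epsilon$. Your grouping-identity argument supplies exactly that missing extremality step: the chain-rule decomposition $H(\overline{P}_{\mathcal{D}}) = H(p+\epsilon) + (p+\epsilon)H_t + (1-(p+\epsilon))H_{t^{\mathrm{c}}}$, where $H_t$ and $H_{t^{\mathrm{c}}}$ denote the entropies of the normalized restrictions to the target block and its complement, holds for \emph{every} distribution with the prescribed target mass, and the elementary facts $0 \leq H_t \leq \log_2 k$ and $0 \leq H_{t^{\mathrm{c}}} \leq \log_2(|\Omega|-k)$ immediately yield both endpoints, with your achievability remarks recovering the paper's two constructions as the equality cases. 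So your version is, if anything, more complete: it proves the interval is valid for all admissible distributions and tight, whereas the paper's construction-first proof makes the extremal strategies concrete but leaves the universal bound implicit. The one piece of bookkeeping to verify is the algebraic collapse $H(a) + a\log_2 k + (1-a)\log_2(|\Omega|-k) = a\log_2(k/a) + (1-a)\log_2\bigl((|\Omega|-k)/(1-a)\bigr)$ with $a = p+\epsilon$, which checks out directly from the expansion of the binary entropy, and the degenerate cases $a \in \{0,1\}$ are handled by the usual convention $0\log_2 0 = 0$.
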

This theorem shows that entropic expressivity is bounded above and below with respect to the level of bias on a fixed target. Table~\ref{tab:expressivityRanges} demonstrates the different expressivity ranges for varying levels of bias. While these ranges may be quite large, maximizing the level of bias significantly reduces the range of possible values of entropic expressivity. 

\begin{table}
    \centering
    \caption{Varying ranges of entropic expressivity for different levels of bias on target $\bm{t}$.}
    \begin{TAB}(r,0.3cm,1cm)[2pt]{|c|c|c|}{|c|c|c|c|}
        % (rows,min,max)[tabcolsep]{columns}{rows}
        $\bias(\D, \bm{t})$ & $\mathbbm{E}[\bm{t}^\top \overline{P}_F]$ & \textbf{Expressivity Range}\\
        \begin{tabular}{@{}c@{}} $-p$ \\ (Minimum bias) \end{tabular} & $0$ &  $[0, \log_2( |\Omega| - k)]$ \\
        \begin{tabular}{@{}c@{}} $0$ \\ (No bias) \end{tabular} & $p$ & $[H(p), \log_2 |\Omega|]$ \\
        \begin{tabular}{@{}c@{}} $1-p$ \\ (Maximum bias) \end{tabular} & $1$ & $[0, \log_2 k]$ \\
    \end{TAB}
    \label{tab:expressivityRanges}
\end{table}

\begin{restatable}[Bias-Expressivity Trade-off]{theorem}{tradeoff}
    \label{thm:tradeoff}
    Given a distribution over information resources $\D$ and a fixed target $t \subseteq \Omega$, entropic expressivity is bounded above in terms of bias,
    $$H(\overline{P}_{\D}) \leq \log_2 |\Omega| - 2 \bias(\D, \bm{t})^2$$ 
    Additionally, bias is bounded above in terms of entropic expressivity,
    \begin{align*}
        \bias(\D,\bm{t}) 
            &\leq \sqrt{\frac{1}{2}(\log_2|\Omega| - H(\overline{P}_{\D}))} \\
            &= \sqrt{\frac{1}{2} D_{\text{KL}}(\overline{P}_{\D} \;||\; \mathcal{U})}.
    \end{align*} 
\end{restatable}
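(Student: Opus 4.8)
The plan is to recognize the bias as a signed difference of probability masses that two distributions over $\Omega$ assign to the target set, and then to connect this difference to the Kullback--Leibler divergence appearing in the entropic-expressivity identity through Pinsker's inequality. The two displayed inequalities are in fact algebraically equivalent, so establishing one delivers the other together with the stated equality.

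First I would unpack the entropic-expressivity identity of Definition~\ref{def:expressivity}. Since $\mathcal{U}$ is uniform on the finite space $\Omega$, we have $H(\mathcal{U}) = \log_2|\Omega|$, and therefore
\[
\log_2|\Omega| - H(\overline{P}_{\mathcal{D}}) = D_{\mathrm{KL}}(\overline{P}_{\mathcal{D}} \,\|\, \mathcal{U}).
\]
This is exactly the equality asserted in the second half of the statement, and it also shows that the first inequality $H(\overline{P}_{\mathcal{D}}) \le \log_2|\Omega| - 2\,\bias(\mathcal{D},\bm{t})^2$ is equivalent, after rearranging, to the single scalar claim $\bias(\mathcal{D},\bm{t})^2 \le \tfrac12 D_{\mathrm{KL}}(\overline{P}_{\mathcal{D}}\,\|\,\mathcal{U})$. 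Hence the whole theorem reduces to proving this one inequality.

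Next I would rewrite the bias. Writing $\overline{P}_{\mathcal{D}} = \E_{\mathcal{D}}[\overline{P}_F]$ as in Definition~\ref{def:bias_D}, the quantity $\bm{t}^\top \overline{P}_{\mathcal{D}} = \sum_{i \in t}(\overline{P}_{\mathcal{D}})_i$ is precisely the mass that $\overline{P}_{\mathcal{D}}$ places on the target set $t$, while $p = k/|\Omega|$ is the mass that $\mathcal{U}$ places on $t$. Thus $\bias(\mathcal{D},\bm{t}) = \overline{P}_{\mathcal{D}}(t) - \mathcal{U}(t)$ is a signed difference of event probabilities under the two distributions. Since the total variation distance satisfies $\lVert \overline{P}_{\mathcal{D}} - \mathcal{U}\rVert_{\mathrm{TV}} = \sup_{A \subseteq \Omega} |\overline{P}_{\mathcal{D}}(A) - \mathcal{U}(A)|$, choosing $A = t$ gives $|\bias(\mathcal{D},\bm{t})| \le \lVert \overline{P}_{\mathcal{D}} - \mathcal{U}\rVert_{\mathrm{TV}}$. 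Applying Pinsker's inequality $\lVert \overline{P}_{\mathcal{D}} - \mathcal{U}\rVert_{\mathrm{TV}}^2 \le \tfrac12 D_{\mathrm{KL}}(\overline{P}_{\mathcal{D}}\,\|\,\mathcal{U})$ and squaring the previous bound then yields $\bias(\mathcal{D},\bm{t})^2 \le \tfrac12 D_{\mathrm{KL}}(\overline{P}_{\mathcal{D}}\,\|\,\mathcal{U})$, and substituting the identity back in recovers both displayed inequalities.

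The main obstacle I expect is bookkeeping the logarithm base in Pinsker's inequality so that it matches the base-$2$ convention forced on $D_{\mathrm{KL}}$ by the entropic-expressivity identity. The classical Pinsker bound is stated with the divergence in nats; converting to bits multiplies the right-hand side by $\ln 2 < 1$, so the base-$2$ version is in fact strictly tighter and the stated constant $\tfrac12$ holds a fortiori. I would flag this explicitly rather than gloss over it. A clean, self-contained alternative that avoids an external Pinsker citation is to coarse-grain $\Omega$ into the two events $\{t,\ \Omega \setminus t\}$, apply the data-processing inequality for KL divergence to reduce to the binary divergence $d_{\mathrm{KL}}(\overline{P}_{\mathcal{D}}(t)\,\|\,p)$, and then invoke the scalar Bernoulli form of Pinsker; I would present this as the fallback route.
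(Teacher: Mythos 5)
Your proposal is correct and takes essentially the same approach as the paper: both identify $\bias(\mathcal{D},\bm{t})$ as the difference $\overline{P}_{\mathcal{D}}(t)-\mathcal{U}(t)$, bound its square by $\tfrac{1}{2}D_{\mathrm{KL}}(\overline{P}_{\mathcal{D}}\,\|\,\mathcal{U})$ via Pinsker's inequality, and rearrange using $H(\mathcal{U})=\log_2|\Omega|$. Your explicit routing through total variation and your remark on the logarithm base are minor pieces of bookkeeping that the paper leaves implicit.
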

Theorem~\ref{thm:tradeoff} demonstrates a trade-off between bias and entropic expressivity. We bound entropic expressivity above in terms of bias and bias above in terms of entropic expressivity such that higher values of bias decrease the range of possible values of expressivity and higher values of expressivity decrease the range of possible values of bias. Thus, a higher level of bias on a specified target restricts the expressivity of the underlying strategy distribution and a higher level of expressivity on the underlying strategy distribution restricts the amount of bias on any arbitrary target. Intuitively, this trade-off means that preferences towards specific targets reduces the potential flexibility of our algorithm over all elements and vice versa.

Lastly, we give a corollary bound allowing us to bound bias as a function of the expected entropy of induced strategy distributions, rather than the entropic expressivity.

\begin{restatable}[Bias Bound Under Expected Expressivity]{corollary}{jensen}
    \label{cor:jensen}
    \begin{align*}
        \bias(\D,\bm{t}) 
            &\leq \sqrt{\frac{1}{2}(\log_2|\Omega| - \E_{\D}[H(\overline{P}_{F})])} \\
            &= \sqrt{ \E_{\D}\left[\frac{1}{2}D_{\text{KL}}(\overline{P}_{F} \;||\; \mathcal{U})\right]}.
    \end{align*}
\end{restatable}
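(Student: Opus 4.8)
The plan is to obtain this bound as an immediate consequence of Theorem~\ref{thm:tradeoff} combined with a single application of Jensen's inequality. Theorem~\ref{thm:tradeoff} already supplies $\bias(\D,\bm{t}) \leq \sqrt{\frac{1}{2}(\log_2|\Omega| - H(\overline{P}_{\D}))}$, where the entropic expressivity $H(\overline{P}_{\D}) = H(\E_{\D}[\overline{P}_F])$ is the entropy of the \emph{averaged} strategy distribution. The corollary merely replaces this quantity with the \emph{expected} entropy $\E_{\D}[H(\overline{P}_F)]$, so the entire task reduces to comparing $H(\E_{\D}[\overline{P}_F])$ against $\E_{\D}[H(\overline{P}_F)]$ and then re-expressing the result as an expected KL divergence.

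First I would invoke the concavity of the Shannon entropy functional $H(\cdot)$ on the probability simplex. Since $\overline{P}_F$ is a simplex-valued random vector indexed by $F \sim \D$, Jensen's inequality gives $H(\E_{\D}[\overline{P}_F]) \geq \E_{\D}[H(\overline{P}_F)]$, i.e. the entropy of the average dominates the average of the entropies. Consequently $\log_2|\Omega| - H(\overline{P}_{\D}) \leq \log_2|\Omega| - \E_{\D}[H(\overline{P}_F)]$, and because $\sqrt{\cdot}$ is monotone on the nonnegative reals, chaining this with the bound from Theorem~\ref{thm:tradeoff} yields the first inequality
\[
\bias(\D,\bm{t}) \leq \sqrt{\frac{1}{2}\big(\log_2|\Omega| - \E_{\D}[H(\overline{P}_F)]\big)}.
\]

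To verify the closing equality I would use the identity from Definition~\ref{def:expressivity}, namely $H(\overline{P}_F) = H(\mathcal{U}) - D_{\mathrm{KL}}(\overline{P}_F \,\|\, \mathcal{U})$ with $H(\mathcal{U}) = \log_2|\Omega|$, so that $\log_2|\Omega| - H(\overline{P}_F) = D_{\mathrm{KL}}(\overline{P}_F \,\|\, \mathcal{U})$ holds pointwise in $F$. Taking expectations over $\D$ and using linearity gives $\log_2|\Omega| - \E_{\D}[H(\overline{P}_F)] = \E_{\D}[D_{\mathrm{KL}}(\overline{P}_F \,\|\, \mathcal{U})]$, and pulling the constant factor $\frac{1}{2}$ inside the expectation produces the stated right-hand side $\sqrt{\E_{\D}[\frac{1}{2}D_{\mathrm{KL}}(\overline{P}_F \,\|\, \mathcal{U})]}$.

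The only point requiring genuine care is the \emph{direction} of the Jensen step: one must confirm that entropy is concave rather than convex, so that the average of entropies is dominated by the entropy of the average. This is exactly what makes the looser expected-entropy quantity a legitimate upper bound when substituted inside the square root; had the inequality gone the other way, the chain would break. Everything else is monotonicity of $\sqrt{\cdot}$ and linearity of expectation, so I do not anticipate any obstacle beyond this direction check.
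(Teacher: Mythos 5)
Your proposal is correct and follows essentially the same route as the paper's own proof: chaining the bound from Theorem~\ref{thm:tradeoff} with the Jensen step $\E_{\D}[H(\overline{P}_F)] \leq H(\E_{\D}[\overline{P}_F])$ from the concavity of entropy, then rewriting $\log_2|\Omega| - \E_{\D}[H(\overline{P}_F)]$ as $\E_{\D}[D_{\mathrm{KL}}(\overline{P}_F \,\|\, \mathcal{U})]$ via linearity of expectation and the pointwise identity from Definition~\ref{def:expressivity}. Your explicit check of the direction of the concavity inequality is exactly the right point of care, and the argument is complete.
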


\section*{\uppercase{Conclusion}}
\noindent
Expanding results on the algorithmic search framework, we supplement the notion of bias and define entropic expressivity, as well as its relation to bias. We upper bound the bias on an arbitrary target set with respect to the minimum bias toward a target set over all possible target sets of a fixed size. Moreover, we upper bound the probability of the difference between the estimated bias and the true bias exceeding some threshold, showing an exponential rate of measure concentration in the number of samples. Entropic expressivity characterizes the degree of uniformity for strategy distributions in expectation for an underlying distribution of information resources. We provide upper and lower bounds of the entropic expressivity with respect to the bias on a specified target and we demonstrate a trade-off between bias and expressivity.

While bias is needed for better-than-chance performance of learning algorithms, bias also hinders the flexibility of an algorithm by reducing the different ways it can respond to varied training data. Although algorithms predisposed to certain outcomes  will not adapt as well as algorithms without strong predispositions,  maximally flexible algorithms (those without any bias) can only perform as well as uniform random sampling (Theorem~\ref{thm:futility}). This paper explores the trade-off, giving bounds for bias in terms of expressivity, and bounds for expressivity in terms of bias, demonstrating that such a trade-off exists. Although the notions of bias are different, the bias-expressivity trade-off can be viewed as a type of bias-variance trade-off~\cite{geman1992neural,kohavi1996bias}, where bias here is not an expected error but an expected deviation from uniform random sampling performance caused by an algorithm's inductive assumptions, and variance is not a fluctuation in observed error caused by changing data but is instead a ``fluctuation'' in algorithm outcome distributions caused by the same. Therefore, our results may provide new insights for that well-studied phenomenon.

\newpage

\bibliographystyle{apalike}
{\small
\bibliography{bibliography}}
\section*{\uppercase{Appendix}}

\begin{lemma}[Existence of subset with at most uniform mass]
    \label{lem:offset}
    Given an $n$-sized subset $S$ of the sample space of an arbitrary probability distribution with total probability mass $M_S$, there exists a $k$-sized proper subset $R \subset S$ with total probability mass $M_R$ such that $$M_R \leq \frac{k}{n}M_S.$$
\end{lemma}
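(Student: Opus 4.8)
The plan is to prove this by a standard averaging (pigeonhole) argument over all $k$-element subsets of $S$. Write $S = \{s_1, \dots, s_n\}$ and let $p_i$ denote the probability mass assigned to $s_i$, so that $M_S = \sum_{i=1}^n p_i$. Because $R$ is required to be a proper subset we have $k < n$; the case $k = 0$ is trivial, so I would assume $1 \le k < n$. The idea is that no single $k$-subset can carry more than the \emph{average} amount of mass, and that average is exactly $\frac{k}{n} M_S$, so some subset must lie at or below it.

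First I would consider the collection of all $\binom{n}{k}$ subsets of $S$ of size exactly $k$ and compute the total mass summed over all of them. By symmetry each element $s_i$ belongs to exactly $\binom{n-1}{k-1}$ of these subsets, so summing $M_R$ over every $k$-subset $R$ gives $\binom{n-1}{k-1}\sum_{i=1}^{n} p_i = \binom{n-1}{k-1} M_S$. Dividing by the number of such subsets yields the average mass of a $k$-subset,
\[
\frac{1}{\binom{n}{k}} \sum_{|R| = k} M_R = \frac{\binom{n-1}{k-1}}{\binom{n}{k}} M_S = \frac{k}{n} M_S,
\]
where the last equality uses the elementary identity $\binom{n-1}{k-1}/\binom{n}{k} = k/n$.

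To conclude, I would invoke the fact that the minimum of a finite collection of real numbers cannot exceed its average: since the average mass over all $k$-subsets equals $\frac{k}{n} M_S$, there must exist at least one $k$-subset $R \subset S$ with $M_R \le \frac{k}{n} M_S$, which is the subset the lemma asserts.

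There is essentially no deep obstacle here; the only points requiring care are verifying the combinatorial identity and tracking the proper-subset condition $k < n$ (so that $R \subset S$ is genuinely proper). As a sanity check and an alternative route, I would note the even more direct argument of sorting the elements of $S$ in increasing order of mass and letting $R$ be the $k$ elements of smallest mass: the average of the $k$ smallest values is at most the overall average $M_S/n$, so their sum is at most $\frac{k}{n} M_S$, giving the bound immediately. The averaging-over-all-subsets version is preferable if one wants to emphasize that such a subset exists for \emph{any} distribution without appealing to an ordering.
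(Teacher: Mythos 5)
Your proof is correct, but it takes a genuinely different route from the paper. The paper proceeds by induction on $k$: the base case $k=1$ finds a single element of mass at most $M_S/n$ via the law of total probability, and the inductive step grows $R_k$ to $R_{k+1}$ by adjoining an element of the complement ${R_k}^{\mathrm{c}}$ whose mass is at most the complement's average $M_{{R_k}^{\mathrm{c}}}/(n-k)$, followed by an explicit slack-tracking computation to verify $M_{R_{k+1}} \leq \frac{k+1}{n}M_S$. Your averaging argument replaces all of that with a single global computation: each element lies in $\binom{n-1}{k-1}$ of the $\binom{n}{k}$ subsets of size $k$, so the average subset mass is exactly $\frac{k}{n}M_S$, and a minimum cannot exceed an average. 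This is shorter, avoids the paper's somewhat delicate algebra with the slack variable $s$, and your sorted-order variant (take the $k$ lightest elements) is more direct still --- it even identifies a concrete witness rather than merely asserting existence. What the paper's induction buys in exchange is that its one-step ingredient (``there is always an element at or below the average of what remains'') is the same elementary fact reused in the complementary-set reasoning of Lemma~\ref{lem:max}, so the inductive style keeps the appendix stylistically uniform; but as a standalone proof of Lemma~\ref{lem:offset}, your version is cleaner and equally rigorous, and you correctly handle the properness condition $k < n$.
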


\begin{proof} We proceed by induction on the size $k$.\\\\
    \textbf{Base Case}: When $k=1$, there exists an element with total probability mass at most $\frac{M_S}{n}$, since for any element in $S$ that has probability mass greater than the uniform mass $\frac{M_S}{n}$, there exists an element with mass strictly less than $\frac{M_S}{n}$ by the law of total probability. This establishes our base case.\\
    
    \noindent \textbf{Inductive Hypothesis}: Suppose that a $k$-sized subset $R_k \subset S$ exists with total probability mass $M_{R_k}$ such that $M_{R_k} \leq \frac{k}{n}M_S$.\\
    
    \noindent \textbf{Induction Step:} We show that there exists a subset $R_{k+1} \subset S$ of size $k+1$ with total probability mass $M_{R_{k+1}}$ such that $M_{R_{k+1}} \leq \frac{k+1}{n}M_S$.
    
    First, let $M_{R_k} = \frac{k}{n}M_S - s$, where $s \geq 0$ represents the slack between $M_{R_k}$ and $\frac{k}{n}M_S$. Then, the total probability mass on ${R_k}^\mathrm{c} := S \setminus R_k$ is $$M_{{R_k}^\mathrm{c}}= M_S - M_{R_k} = M_S - \frac{k}{n}M_S + s.$$
    Given that $M_{{R_k}^\mathrm{c}}$ is the total probability mass on set ${R_k}^\mathrm{c}$, either each of the $n-k$ elements in ${R_k}^\mathrm{c}$ has a uniform mass of $M_{{R_k}^\mathrm{c}}/(n-k)$, or they do not. If the probability mass is uniformly distributed, let $e$ be an element with mass exactly $M_{{R_k}^\mathrm{c}}/(n-k)$. Otherwise, for any element $e'$ with mass greater than $M_{{R_k}^\mathrm{c}}/(n-k)$, by the law of total probability there exists an element $e \in {R_k}^\mathrm{c}$ with mass less than $M_{{R_k}^\mathrm{c}}/(n-k)$. Thus, in either case there exists an element $e \in  {R_k}^\mathrm{c}$ with mass at most $M_{{R_k}^\mathrm{c}}/(n-k)$.
    
    Then, the set $R_{k+1} = R_k \cup \{e\}$ has total probability mass
    \begin{align*}
        M_{R_{k+1}} &\leq M_{R_k} + \frac{M_{{R_k}^\mathrm{c}}}{n-k} \\
                    &= \frac{k}{n}M_S - s + \frac{M_S - \frac{k}{n}M_S + s}{n-k} \\
                    &=  \frac{kM_S(n-k) + n(M_S - \frac{k}{n}M_S + s)}{n(n-k)} - s \\
                    &= \frac{knM_S - k^2M_S +n M_S - kM_S + ns}{n(n-k)} - s \\
                    &= \frac{(n-k)(kM_S + M_S) + ns}{n(n-k)} - s \\
                    &= \frac{k+1}{n}M_S + \frac{s}{n-k} - s \\
                    &= \frac{k+1}{n}M_S + \frac{s(1 +k - n)}{n-k} \\
                    &\leq \frac{k+1}{n}M_S 
    \end{align*}
    where the final inequality comes from the fact that $k < n$. Thus, if a $k$-sized subset $R_k \in S$ exists such that $M_{R_k} \leq \frac{k}{n} M_S$, a $k+1$-sized subset $R_{k+1} \in S$ exists such that $M_{R_{k+1}} \leq \frac{k+1}{n} M_S$. \\\\
    Since the base case holds true for $k=1$ and the inductive hypothesis implies that this rule holds for $k+1$, we can always find a $k$-sized subset $R_k \in S$ such that $$M_{R_k} \leq \frac{k}{n}M_S.$$
\end{proof}

\begin{lemma}[Maximum probability mass over a target set]
    \label{lem:max}
    Let $\tau_{k} = \{ \bm{t} | \bm{t} \in \{ 0, 1 \}^{ |\Omega| }, ||\bm{t}|| = \sqrt{k} \}$ be the set of all $|\Omega|$-length $k$-hot vectors. Given an arbitrary probability distribution $P$, 
    \begin{align*}
        \lub{\mathbf{t} \in \tau_k} \mathbf{t}^\top P \leq 1 - \bigg( \frac{1-p}{p} \bigg) \inf_{\mathbf{t} \in \tau_k} \mathbf{t}^\top P
    \end{align*}
    where $p = \frac{k}{|\Omega|}$.
\end{lemma}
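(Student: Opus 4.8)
The plan is to translate the statement about the supremum and infimum of $\bm{t}^\top P$ over $\tau_k$ into a statement purely about probability mass on subsets. For a $k$-hot vector $\bm{t}$, the inner product $\bm{t}^\top P$ is exactly the total mass $P$ assigns to the $k$-element subset indexed by the support of $\bm{t}$. Hence $\sup_{\bm{t} \in \tau_k} \bm{t}^\top P$ is the largest mass $P$ places on any $k$-sized subset of $\Omega$ (realized by the $k$ heaviest coordinates), and $\inf_{\bm{t} \in \tau_k} \bm{t}^\top P$ is the smallest such mass (realized by the $k$ lightest coordinates). I would first make this identification explicit, reducing the claim to an inequality between the maximal and minimal $k$-subset masses of $P$.

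The key idea is then to pass to the complement. Let $T^\ast$ be a $k$-subset attaining the supremum and set $C = \Omega \setminus T^\ast$, a subset of size $|\Omega| - k$ whose mass equals $1 - \sup_{\bm{t} \in \tau_k} \bm{t}^\top P$. Applying Lemma~\ref{lem:offset} to $C$ (in the role of the $n$-sized set $S$, here of size $|\Omega| - k$, seeking an $r = k$ sized subset) produces a $k$-sized subset $R \subset C$ with mass $M_R \leq \frac{k}{|\Omega| - k}\big(1 - \sup_{\bm{t} \in \tau_k} \bm{t}^\top P\big)$. Since $R$ is itself a $k$-subset, its mass is at least the minimum over all $k$-subsets, i.e. $M_R \geq \inf_{\bm{t} \in \tau_k} \bm{t}^\top P$. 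Chaining these two bounds yields $\inf_{\bm{t} \in \tau_k} \bm{t}^\top P \leq \frac{k}{|\Omega| - k}\big(1 - \sup_{\bm{t} \in \tau_k} \bm{t}^\top P\big)$.

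Finally I would rearrange. Using $p = \frac{k}{|\Omega|}$ one checks $\frac{k}{|\Omega| - k} = \frac{p}{1-p}$, so multiplying the chained inequality through by $\frac{1-p}{p}$ and isolating the supremum gives $\sup_{\bm{t} \in \tau_k} \bm{t}^\top P \leq 1 - \big(\frac{1-p}{p}\big)\inf_{\bm{t} \in \tau_k} \bm{t}^\top P$, which is exactly the claim. This also cleanly feeds Theorem~\ref{thm:biasUpperBound}, since $\bias(\mathcal{B}, \bm{t}) = \bm{t}^\top \overline{P}_F - p$ shifts both the supremum and infimum by the same constant $p$.

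The main obstacle is the applicability of Lemma~\ref{lem:offset}: it delivers a \emph{proper} $k$-subset of a set of size $|\Omega| - k$, which requires $k \leq |\Omega| - k$, i.e. $p \leq \tfrac12$. This is not a cosmetic restriction, as one can check that the inequality can fail when $p > \tfrac12$ (for example $P = (\tfrac12,\tfrac12,0)$ with $|\Omega|=3$, $k=2$). I would therefore confine the argument to the target-set regime $p \leq \tfrac12$, where the complement $C$ is large enough to host a $k$-subset, and handle the boundary $k = |\Omega| - k$ directly by taking $R = C$ (so that $M_R = 1 - \sup$ matches the bound with $\frac{k}{|\Omega|-k} = 1$). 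A secondary step to verify is the rearrangement/exchange argument certifying that the extrema over $\tau_k$ are genuinely attained at the top-$k$ and bottom-$k$ coordinates of $P$, which is routine but worth stating.
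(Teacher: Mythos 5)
Your proof is correct and follows essentially the same route as the paper's: the paper also passes to the complement of a (near-)supremal target set and invokes Lemma~\ref{lem:offset} to extract a light $k$-subset, merely phrasing the argument as a contradiction --- i.e., the contrapositive of your direct chain $\inf_{\bm{t} \in \tau_k} \bm{t}^\top P \leq M_R \leq \frac{k}{|\Omega|-k}\left(1 - \sup_{\bm{t} \in \tau_k} \bm{t}^\top P\right)$ followed by the same rearrangement via $\frac{k}{|\Omega|-k} = \frac{p}{1-p}$.

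Where you genuinely depart from the paper is the hypothesis $p \leq \tfrac{1}{2}$, and here you have caught a real flaw: the paper states and proves the lemma with no such restriction, silently applying Lemma~\ref{lem:offset} to produce a $k$-sized \emph{proper} subset of the $(|\Omega|-k)$-sized complementary set, which presupposes $k < |\Omega| - k$. Your counterexample is valid: with $P = (\tfrac{1}{2}, \tfrac{1}{2}, 0)$, $|\Omega| = 3$, $k = 2$ (so $p = \tfrac{2}{3}$), one has $\sup_{\bm{t} \in \tau_k} \bm{t}^\top P = 1$ while $1 - \big(\frac{1-p}{p}\big)\inf_{\bm{t} \in \tau_k} \bm{t}^\top P = 1 - \tfrac{1}{2} \cdot \tfrac{1}{2} = \tfrac{3}{4}$, so the stated inequality fails, and the same example (taking an algorithm whose averaged distribution is $P$) violates Theorem~\ref{thm:biasUpperBound} as well, since $\sup \bias = \tfrac{1}{3} > \tfrac{1}{12} = \big(\frac{p-1}{p}\big)\inf\bias$. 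Your handling of the boundary case $k = |\Omega| - k$ by taking $R = C$ (where $\frac{k}{|\Omega|-k} = 1$) is also the right patch, since Lemma~\ref{lem:offset} as stated only yields proper subsets. In short: modulo contraposition your argument is the paper's argument, but your explicit restriction to $p \leq \tfrac{1}{2}$ is not cosmetic caution --- it is a correction the paper itself needs.
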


\begin{proof}
    We proceed by contradiction. Suppose that 
    \begin{align*}
        \lub{\mathbf{t} \in \tau_k}\mathbf{t}^\top P > 1 - \bigg( \frac{1-p}{p} \bigg) \inf_{\mathbf{t} \in \tau_k} \mathbf{t}^\top P.
    \end{align*}
    Then, there exists some target function $\mathbf{t} \in \tau_k$ such that $$\mathbf{t}^\top P > 1 - \bigg( \frac{1-p}{p} \bigg) \inf_{\mathbf{t} \in \tau_k} \mathbf{t}^\top P.$$
    Let $\mathbf{s}$ be the complementary target function to $\mathbf{t}$ such that $\mathbf{s}$ is an $|\Omega|$-length, $(|\Omega| - k)$-hot vector that takes value $1$ where $\mathbf{t}$ takes value $0$ and takes value $0$ elsewhere. Then, by the law of total probability,
    \begin{align*}
        \mathbf{s}^\top P < \bigg( \frac{1-p}{p} \bigg) \inf_{\mathbf{t} \in \tau_k} \mathbf{t}^\top P.
    \end{align*}
    By Lemma \ref{lem:offset}, there exists a $k$-sized subset of the complementary target set with total probability mass $q$ such that
    \begin{align*}
        q   &\leq \frac{k}{|\Omega|-k} (\mathbf{s}^\top P) \\
            &< \frac{k}{|\Omega|-k}\bigg( \bigg( \frac{1-p}{p} \bigg) \inf_{\mathbf{t} \in \tau_k} \mathbf{t}^\top P\bigg) \\
            &= \frac{k}{|\Omega|-k}\bigg( \bigg( \frac{|\Omega|-k}{k} \bigg) \inf_{\mathbf{t} \in \tau_k} \mathbf{t}^\top P\bigg) \\
            &= \inf_{\mathbf{t} \in \tau_k} \mathbf{t}^\top P.
    \end{align*}
    Thus, we can always find a target set with total probability mass strictly less than $\inf_{\mathbf{t} \in \tau_k} \mathbf{t}^\top P$, which is a contradiction.
    
    Therefore, we have proven that
    \begin{align*}
        \lub{\mathbf{t} \in \tau_k}\mathbf{t}^\top P \leq 1 - \bigg( \frac{1-p}{p} \bigg) \inf_{\mathbf{t} \in \tau_k} \mathbf{t}^\top P.
    \end{align*}
\end{proof}

\biasUpperBound*
\begin{proof}
    First, define $$m := \inf_{\mathbf{t} \in \tau_k} \mathbbm{E}_{\mathcal{U}[\mathcal{B}]}[\mathbf{t}^\top \overline{P}_F] = \inf_{\mathbf{t} \in \tau_k} \bias(\mathcal{B}, \mathbf{t}) + p$$ and $$M := \lub{\mathbf{t} \in \tau_k} \mathbbm{E}_{\mathcal{U}[\mathcal{B}]}[\mathbf{t}^\top \overline{P}_F] = \lub{\mathbf{t} \in \tau_k} \bias(\mathcal{B}, \mathbf{t}) + p.$$ 
    By Lemma \ref{lem:max}, 
    \begin{align*}
        M \leq 1 - \bigg( \frac{1-p}{p} \bigg) m.
    \end{align*}
    Substituting the values of $m$ and $M$,
    \begin{align*}
        \lub{\mathbf{t} \in \tau_k} \bias(\mathcal{B}, \mathbf{t}) &\leq 1 - p - \bigg( \frac{1-p}{p} \bigg) \\
        &\bigg(\inf_{\mathbf{t} \in \tau_k} \bias(\mathcal{B}, \mathbf{t}) + p \bigg) \\
        &= \bigg(\frac{p-1}{p}\bigg)\inf_{\mathbf{t} \in \tau_k} \bias(\mathcal{B},\mathbf{t}).
    \end{align*}
\end{proof}

\begin{figure}
    \centering
    \pgfplotsset{
colormap={cool}{rgb255(0cm)=(205, 230, 249); rgb255(1cm)=(0,128,255); rgb255(2cm)=(255,0,255)}
}

\definecolor{massColor}{RGB}{205,230,249}

% Figure 1. High KL divergence
\def\figureOneX{5}
\def\figureOneY{5}

\begin{tikzpicture}
    \draw (\figureOneX, \figureOneY) -- node[below, text centered] {\scriptsize $k$} (\figureOneX + 3, \figureOneY);
    \draw (\figureOneX + 3, \figureOneY) -- node[below, text centered] {\scriptsize $|\Omega| - k$} (\figureOneX + 7.44, \figureOneY);
    \draw[dashed] (\figureOneX + 2.72, \figureOneY + 1.5) -- (\figureOneX + 2.72, \figureOneY - 0.3);
    
    \node[minimum height=1.22cm, text centered, label={\footnotesize $p + \epsilon$}, fill=massColor] at (\figureOneX + 2.3, \figureOneY + 0.63){};
    
    \node[minimum height=1cm, text centered, label={\footnotesize 1 - ($p + \epsilon$)}, fill=massColor] at (\figureOneX + 6.8, \figureOneY + 0.52){};
\end{tikzpicture}

% Figure 2. Low KL Divergence

\def\figureTwoX{\figureOneX}
\def\figureTwoY{\figureOneY + 10}

\begin{tikzpicture}
    \draw (\figureTwoX, \figureTwoY) -- node[below, text centered] {\scriptsize $k$} (\figureTwoX + 3, \figureTwoY);
    \draw (\figureOneX + 3, \figureTwoY) -- node[below, text centered] {\scriptsize $|\Omega| - k$} (\figureTwoX + 7.44, \figureTwoY);
    \draw[dashed] (\figureTwoX + 2.72, \figureTwoY + 1.5) -- (\figureTwoX + 2.72, \figureTwoY - 0.3); 

    \foreach \x in {0.12, 0.52, 0.92, 1.32, 1.72, 2.52} % loop for rectangles
        \node[minimum height=0.35cm, text centered, fill=massColor] at (\figureTwoX + \x, \figureTwoY + 0.2){};
    \node[minimum height=0.35cm, text centered, label={\footnotesize $\frac{p + \epsilon}{k}$}, fill=massColor] at (\figureTwoX + 2.12, \figureTwoY + 0.2){};
    
    \foreach \x in {2.92, 3.32, 3.72, 4.12, 4.52, 4.92, 5.32, 5.72, 6.12, 6.52, 7.32} % loop for rectangles
        \node[text centered, fill=massColor] at (\figureTwoX + \x, \figureTwoY + 0.14){};
    \node[text centered, label={\footnotesize $\frac{1 - (p + \epsilon)}{|\Omega| - k}$}, fill=massColor] at (\figureTwoX + 6.92, \figureTwoY + 0.14){};
    
    % \node[minimum height=1.22cm, text centered, label={\footnotesize 1 - ($p + \epsilon$)}, fill=massColor] at (\figureTwoX + 6.5, \figureTwoY + 0.63);
\end{tikzpicture}
    \caption{Assuming positive bias, this figure shows two discrete probability distributions over $\Omega$. The top is of an algorithm with high  KL divergence while the bottom is of an algorithm with low KL divergence.}
    \label{fig:fig_3}
\end{figure}
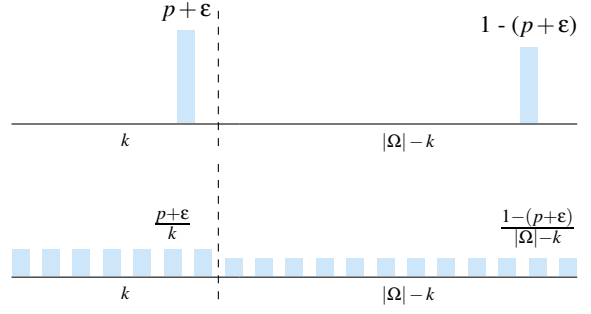

\biasDifference*
\begin{proof}
    Define 
    \begin{align*}
        \overline{B}_X &:= \frac{1}{n}\sum_{i=1}^{n} \bm{t}^{\top}\overline{P}_{X_i} \\
        &\phantom{:}= \bias(X, \bm{t}) + p.
    \end{align*}
    Given that $X$ is an iid sample from $\mathcal{D}$, we have
    \begin{align*}
        \E[\overline{B}_X] 
        &= \E\left[\frac{1}{n}\sum_{i=1}^{n} \bm{t}^{\top}\overline{P}_{X_i}\right] \\
        &= \frac{1}{n}\sum_{i=1}^{n} \E\left[\bm{t}^{\top}\overline{P}_{X_i}\right] \\
        &= \bias(\mathcal{D}, \bm{t}) + p.
    \end{align*}
    By Hoeffding's inequality and the fact that $$0 \leq \overline{B}_X \leq 1$$ we obtain
    \begin{align*}
    \mathbbm{P}(|\bias(X, \bm{t}) - \bias(\mathcal{D}, \bm{t})| \geq \epsilon) &=
        \mathbbm{P}(|\overline{B}_X - \E[\overline{B}_X]| \geq \epsilon)\\
        &\leq 2e^{-2n\epsilon^2}.
    \end{align*}
\end{proof}

\expressivityBiasBound*
\begin{proof}
    Following definition \ref{def:expressivity}, the expressivity of a search algorithm varies solely with respect to $D_{\mathrm{KL}}(\overline{P}_{\D} \;||\; \mathcal{U})$ since we always consider the same search space and thus $H(\mathcal{U})$ is a constant value. We obtain a lower bound of the expressivity by maximizing the value of $D_{\mathrm{KL}}(\overline{P}_{\D} \;||\; \mathcal{U})$ and an upper bound by minimizing this term.
    
    First, we show that $H(p + \epsilon)$ is a lower bound of expressivity by constructing a distribution that deviates the most from a uniform distribution over $\Omega$. By the definition of $\bias(\D, \bm{t})$, we place $(p + \epsilon)$ probability mass on the target set $t$ and $1 - (p + \epsilon)$ probability mass on the remaining $(n-k)$ elements of $\Omega$. We distribute the probability mass such that all of the $(p + \epsilon)$ probability mass of the target set is concentrated on a single element  and all of the $1 - (p + \epsilon)$ probability mass of the complement of the target set is concentrated on a single element. In this constructed distribution where $D_{\mathrm{KL}}(\overline{P}_{\D} \;||\; \mathcal{U})$ is maximized, the value of expressivity is 
    \begin{align*}
        H(\overline{P}_{\D}) 
            &= - \sum_{\omega \in \Omega} \overline{P}_{\D}(\omega) \log_2 \overline{P}_{\D}(\omega) \\
            &= -(p + \epsilon) \log_2(p + \epsilon) \\
            &- (1 - (p + \epsilon)) \log_2(1 - (p + \epsilon)) \\
            &= H(p + \epsilon)
    \end{align*}
    where the $H(p + \epsilon)$ is the entropy of a Bernoulli distribution with parameter $(p + \epsilon)$. The entropy of this constructed distribution gives a lower bound on expressivity, $$H(\overline{P}_{\D}) \geq H(p + \epsilon).$$
    \\
    Now, we show that $$(p + \epsilon) \log_2 \Big(\frac{k}{p + \epsilon}\Big) + (1 - (p + \epsilon)) \log_2 \Big(\frac{|\Omega| - k}{1 - (p + \epsilon)}\Big)$$ is an upper bound of expressivity by constructing a distribution that deviates the least from a uniform distribution over $\Omega$. In this case, we uniformly distribute $\frac{1}{|\Omega|}$ probability mass over the entire search space, $\Omega$. Then, to account for the $\epsilon$ level of bias, we add $\frac{\epsilon}{k}$ probability mass to elements of the target set and we remove $\frac{\epsilon}{n-k}$ probability mass to elements of the complement of the target set. In this constructed distribution where $D_{\mathrm{KL}}(\overline{P}_{\D} \;||\; \mathcal{U})$ is minimized, the value of expressivity is 
    \begin{align*}
        H(\overline{P}_{\D}) 
            &= - \sum_{\omega \in \Omega} \overline{P}_{\D}(\omega) \log_2 \overline{P}_{\D}(\omega) \\
            &= - \sum_{\omega \in t} \bigg(\frac{1}{|\Omega|} + \frac{\epsilon}{k}\bigg) \log_2 \bigg(\frac{1}{|\Omega|} + \frac{\epsilon}{k}\bigg) \\
            &- \sum_{\omega \in t^\mathrm{c}} \bigg(\frac{1}{|\Omega|} - \frac{\epsilon}{|\Omega| - k}\bigg) \log_2 \bigg(\frac{1}{|\Omega|} - \frac{\epsilon}{|\Omega| - k}\bigg) \\
            &= - \sum_{\omega \in t} \bigg( \frac{p + \epsilon}{k}\bigg) \log_2 \bigg(\frac{p + \epsilon}{k}\bigg) \\
            &- \sum_{\omega \in t^\mathrm{c}} \bigg(\frac{1 - (p + \epsilon)}{|\Omega| - k}\bigg) \log_2 \bigg(\frac{1 - (p + \epsilon)}{|\Omega| - k}\bigg) \\
            &= - k \bigg( \frac{p + \epsilon}{k}\bigg) \log_2 \bigg(\frac{p + \epsilon}{k}\bigg) \\
            &- (|\Omega| - k) \bigg(\frac{1 - (p + \epsilon)}{|\Omega| - k}\bigg) \log_2 \bigg(\frac{1 - (p + \epsilon)}{|\Omega| - k}\bigg) \\
            &= (p + \epsilon) \log_2 \bigg(\frac{k}{p + \epsilon}\bigg) \\
            &+  (1 - (p + \epsilon)) \log_2 \bigg(\frac{|\Omega| - k}{1 - (p + \epsilon)}\bigg).
    \end{align*}
    The entropy on this constructed distribution gives an upper bound on expressivity,
    \begin{align*}
        H(\overline{P}_{\D}) &\leq (p + \epsilon) \log_2 \bigg(\frac{k}{p + \epsilon}\bigg) \\
        &+  (1 - (p + \epsilon)) \log_2 \bigg(\frac{|\Omega| - k}{1 - (p + \epsilon)}\bigg).
    \end{align*}
    These two bounds give us a range of possible values of expressivity given a fixed level of bias, namely
    \begin{align*}
        H(\overline{P}_{\D}) \in &\bigg[H(p + \epsilon), \bigg((p + \epsilon) \log_2 \bigg(\frac{k}{p + \epsilon}\bigg) \\
        &+ (1 - (p + \epsilon)) \log_2 \bigg(\frac{|\Omega| - k}{1 - (p + \epsilon)}\bigg)\bigg)\bigg].
    \end{align*}
\end{proof}

\tradeoff*

\begin{proof}
    Let $\omega \in t$ denote the measurable event that $\omega$ is an element of target set $t \subseteq \Omega$, and let $\Sigma$ be the sigma algebra of measurable events. 
    First, note that 
    \begin{align*}
        \bias(\D, t)^2
            &= |\bias(\D, t)|^2 \\
            &= |\mathbf{t}^{\top}\E_{\D}[\overline{P}_{F}] - p|^2\\
            &= |\mathbf{t}^{\top}\overline{P}_{\D} - p|^2\\
            &= |\overline{P}_{\D}(\omega \in t) - p|^2 \\
            &\leq \frac{1}{2} D_{\text{KL}}(\overline{P}_{\D} \;||\; \mathcal{U})\\
            &= \frac{1}{2}( H(\mathcal{U}) - H(\overline{P}_{\D}))\\
            &= \frac{1}{2} (\log_2 |\Omega| - H(\E_{\D}[\overline{P}_{F}]))
    \end{align*}
    where the inequality is an application of Pinsker's Inequality. The quantity $D_{\text{KL}}(\overline{P}_{\D} \;||\; \mathcal{U})$ is the Kullback-Leibler divergence between distributions $\overline{P}_{\D}$ and $\mathcal{U}$, which are distributions on search space $\Omega$. \\\\
    Thus,
    \begin{align*}
        H(\E_{\D}[\overline{P}_{F}]) &\leq \log_2 |\Omega| - 2 \bias(\D, \bm{t})^2
    \end{align*}
    and
    \begin{align*}
        \bias(\D,t) 
            &\leq \sqrt{\frac{1}{2}(\log_2|\Omega| - H(\overline{P}_{\D}))} \\
            &= \sqrt{\frac{1}{2} D_{\text{KL}}(\overline{P}_{\D} \;||\; \mathcal{U})} \\
            &= \sqrt{\frac{1}{2}(\log_2|\Omega| - H(\E_{\D}[\overline{P}_{F}]))}.
    \end{align*} 
\end{proof}

\jensen*
\begin{proof}
    By the concavity of the entropy function and Jensen's Inequality, we obtain
    \begin{align*}
        \E_{\D}[H(\overline{P}_{F})] 
        &\leq H(\E_{\D}[\overline{P}_{F}])
        \leq \log_2 |\Omega| - 2 \bias(\D, \bm{t})^2.
    \end{align*}
    Thus, an upper bound of bias is
    \begin{align*}
       \bias(\D,t) &\leq  \sqrt{\frac{1}{2} D_{\text{KL}}(\overline{P}_{\D} \;||\; \mathcal{U})} \\
        &= \sqrt{\frac{1}{2}(\log_2|\Omega| - H(\E_{\D}[\overline{P}_{F}]))}\\
        &\leq \sqrt{\frac{1}{2}(\log_2|\Omega| - \E_{\D}[H([\overline{P}_{F}])])}\\
        &= \sqrt{ \E_{\D}\left[\frac{1}{2}D_{\text{KL}}(\overline{P}_{F} \;||\; \mathcal{U})\right]},
    \end{align*}
    where the final equality follows from the linearity of expectation and the definition of KL-divergence.
\end{proof}

\end{document}